\documentclass{fairmeta}
\usepackage[most]{tcolorbox}
\usepackage{amsmath}
\usepackage{amssymb}
\usepackage{amsthm}
\usepackage{booktabs}
\usepackage{float}
\usepackage{textcomp}
\usepackage{multirow}
\usepackage{graphicx}
\usepackage{tabularx}
\usepackage{microtype}

\newtheorem{theorem}{Theorem}
\title{APEX: Adaptive Policy Execution for\\ Precise Manipulation}

\author[1,*]{Mengfei Zhao}
\author[1,*]{Chenxi Jiang}
\author[1]{Tuo An}
\author[1,\dagger]{Jindou Jia}
\author[1,\dagger]{Jianfei Yang}

\affiliation[1]{MARS Lab, Nanyang Technological University}
\contribution[*]{Equal contribution}
\contribution[\dagger]{Corresponding authors}

\abstract{
Modern imitation learning methods, including visuomotor and Vision-Language-Action (VLA) policies, typically output high-level action references that are executed by low-level controllers. However, the absence of higher-order reference signals, together with the policy's lack of awareness of the underlying low-level control dynamics during training, inevitably induces an execution gap. As a result, realized actions deviate systematically from policy-commanded ones, with a critical impact on precision-sensitive manipulation. Prior work either modifies the policy architecture or the low-level controller, both requiring intrusive changes to the pretrained policy or packaged controller. This raises a natural question: when the policy and controller are both treated as inaccessible black boxes, can we bridge the execution gap? We propose \textbf{A}daptive \textbf{P}olicy \textbf{EX}ecution (\textbf{APEX}), a plug-and-play framework inserted between the policy and the controller that reconstructs a dynamically feasible reference from policy outputs and adapts at test-time according to low-level state feedback, with a provable convergence guarantee. Extensive empirical studies show that APEX reduces controller-induced tracking error by 41.2\% on demonstration replay and improves manipulation success by 4.8--25.8 percentage points across four visuomotor and VLA policy classes.
}

\metadata[Project site]{\url{https://murphyzhao04.github.io/apex-project-page/}}

\correspondence{
\email{jindou.jia@ntu.edu.sg}, \email{jianfei.yang@ntu.edu.sg}
}

\begin{document}
\maketitle
\section{Introduction}

\begin{figure}[t]
    \centering
    \includegraphics[trim=0.0cm 0.0cm 0.0cm 0.0cm, clip, width=\linewidth]{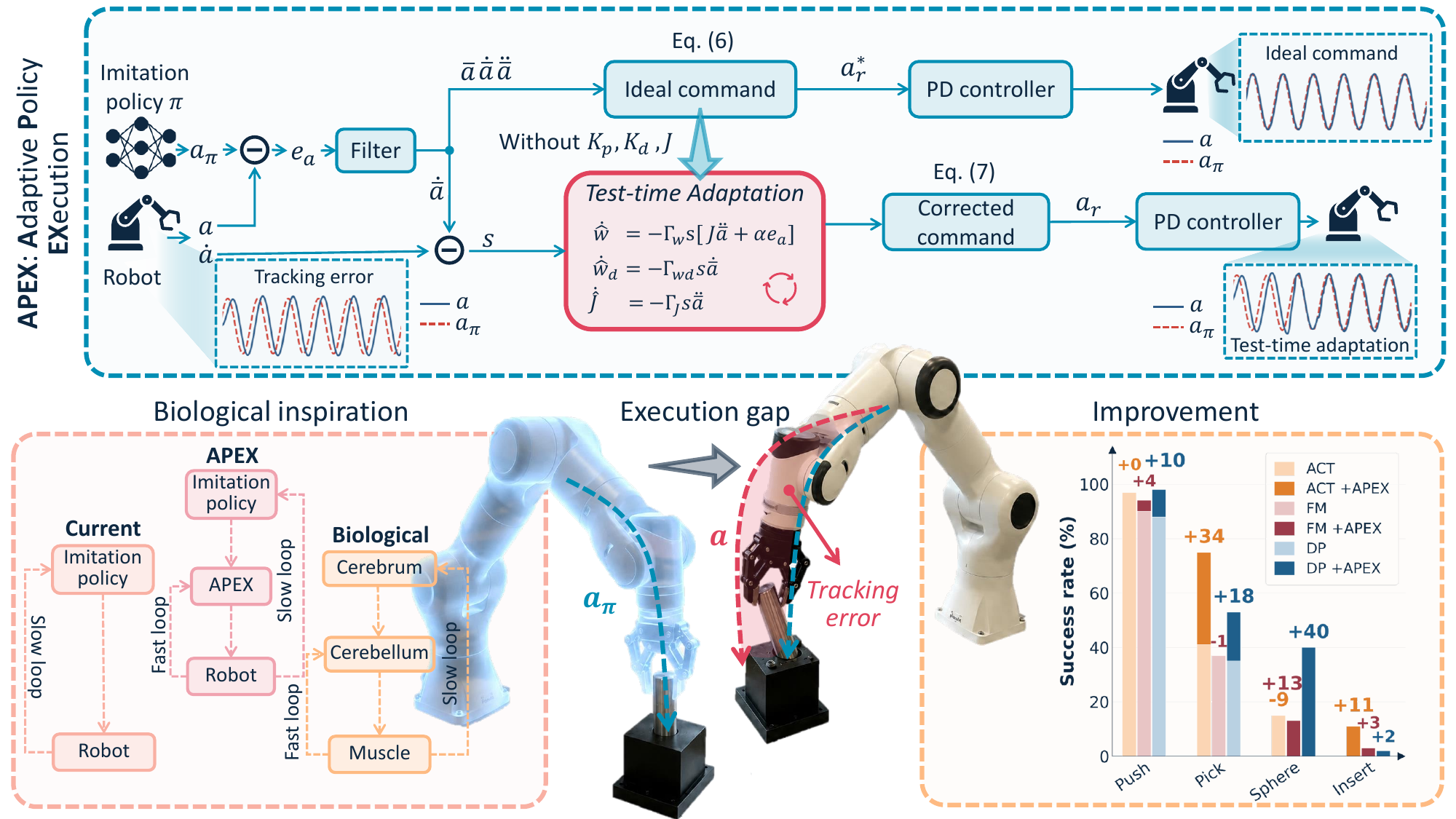}
    \vspace{-1em}
    \caption{\textbf{Overview of APEX.} \textbf{Bottom center:}
    At deployment, the command is issued by a learned policy, and the motion actually realized by the PD controller can diverge from the commanded reference.
    \textbf{Bottom left:} Motivated by biological motor control, where fast
    sensorimotor feedback loops correct small deviations on the fly rather
    than regenerating the time-consuming high-level plan, APEX inserts an \emph{online
    correction layer} between the policy and controller, leaving both untouched.
    \textbf{Top:} APEX first reconstructs a dynamically
    feasible reference from the policy output through an adaptive passive filter,
    supplying the higher-order signals the controller requires.
    Second, it augments this reference with test-time adaptive terms whose coefficients are updated online from the real-time tracking error, compensating for unknown system dynamics and controller parameters that are typically inaccessible on commercial platforms.    \textbf{Bottom right:} Across four policy classes, APEX reduces
    tracking error and improves the success rate.}
    \label{fig:teaser}
\vspace{-1.5em}
\end{figure}

Imitation learning, which trains robot policies from demonstration trajectories~\citep{pomerleau1988alvinn,
argall2009survey}, has become a cornerstone of modern robot learning, powering recent advances in visuomotor        
policies~\citep{zhao2023learning, chi2023diffusion, jia2026action, jia2026mars} and Vision-Language-Action (VLA) models~\citep{kim2024openvla,  
mees2024octo, black2024pi0, pertsch2025pi05}. Such schemes typically produce joint targets~\citep{zhao2023learning} or end-effector~\citep{chi2023diffusion,padalkar2023open,kim2024openvla} trajectories~\citep{padalkar2023open}, which are subsequently executed by a low-level controller, such as a Proportional-Derivative (PD) scheme. However, inadequate high-level outputs (e.g., discontinuities and missing higher-order references) and imperfect low-level controllers (e.g., unmodeled dynamics and parameter uncertainties) make precise execution elusive in real-world deployment.
We refer to this discrepancy between policy-level command and robot-native execution as the \textit{test-time execution gap}. The issue becomes particularly critical in precision-sensitive manipulation tasks, where even small tracking errors can disrupt contact timing or cause misalignment during grasping and insertion, ultimately leading to task failure.

Existing approaches attempt to mitigate this problem from several directions. One line of work augments policy      
outputs with richer trajectory-level representations, for example, through spline-based parameterizations or 
explicit velocity augmentations that provide analytical velocity profiles for low-level
tracking~\citep{yang2026abpolicy, hechtl2026tracking, bai2026flash}. However, these methods are inapplicable when the underlying policy architecture and training pipeline cannot be modified. Another line learns residual corrections on top of a base    
policy or controller~\citep{silver2018residual, johannink2019residual}. Although effective, these methods require additional data collection and a separate training phase. More recently, test-time refinement methods have been proposed to smooth or refine policy outputs without retraining~\citep{zhao2023learning, black2025rtc, liu2024bid, lipo2025, leave2025observation}. These approaches mainly address policy-side artifacts, such as latency and temporal discontinuities, but have not considered the tracking errors inherent to the low-level controller. This raises a natural question: can the execution gap be bridged without modifying either the policy or the controller?


A useful perspective comes from biological motor control. When humans perform fine manipulation, e.g., threading a needle or placing a cup on a saucer, the high-level plan is rarely regenerated during movement, as doing so would be time-consuming. Instead, the nervous system continuously adjusts the ongoing movement through fast sensorimotor feedback loops that correct small deviations on the fly~\citep{aoki2019universal, markov2021cerebellar, jia2025feedback}. This separation between what to do (slow, policy-level) and how to execute it accurately (fast, execution-level) suggests that the test-time execution gap is more naturally closed by an online correction layer acting on the reference stream, rather than by modifying the policy and the controller themselves.

Building on this principle, we propose the \textbf{A}daptive \textbf{P}olicy \textbf{EX}ecution (\textbf{APEX}) framework, which implements such correction in two stages. First, APEX reconstructs a smooth, dynamically feasible reference from policy outputs through an adaptive passive filter~\citep{8735721}, supplying the higher-order signals that low-level controllers require. Second, it augments this reference with test-time adaptive terms that compensate for unknown system dynamics and controller parameters, with coefficients updated online from the real-time tracking error. Notably, APEX is agnostic to the specific policy and controller class, also admitting provable convergence guarantees.

We evaluate APEX in three settings. First, we remove the policy and replay demonstration trajectories through the low-level controller, isolating the execution gap from policy performance. APEX reduces 41.2\% of the tracking error. Second, we pair APEX with four policy classes, including ACT~\citep{zhao2023learning}, diffusion policy~\citep{chi2023diffusion}, flow matching~\citep{flowmatching}, and the VLA $\pi_{0.5}$~\citep{pertsch2025pi05}, across multiple manipulation tasks and controller configurations, covering both deterministic and generative paradigms. APEX improves the success rate by 4.8--25.8 percentage points (pp), with clear gains on PegInsertion, a task with only 3 mm radial clearance. Finally, we deploy APEX on a UR5 for precision-sensitive tasks. The most pronounced gain occurs on the contact-rich PegInsertion task, where APEX improves the flow matching policy's success rate from 20\% to 35\%.






\section{Related Work}
\label{sec:related}

We discuss three lines of work most closely related to the test-time execution gap.

\textbf{Modifying the action representation.}
One line of work modifies how policies generate actions. The action space itself, joint angles, end-effector poses, or variable-impedance commands, shapes learning behavior~\citep{martin2019variable, aljalbout2024role}. Various methods enrich the output with B-spline trajectories or velocity augmentations to supply higher-order tracking signals~\citep{yang2026abpolicy, hechtl2026tracking, bai2026flash}, or output torques directly~\citep{kumar2023decap}. All require retraining, which is prohibitive for large pretrained VLAs with action spaces fixed by collected datasets~\citep{padalkar2023open}. APEX leaves the policy untouched and corrects execution downstream.

\textbf{Learning residual corrections.}
Residual learning keeps a base policy or controller fixed and learns an additive correction~\citep{silver2018residual, johannink2019residual}. Recent extensions train the residual via differentiable simulation for legged locomotion~\citep{luo2024residual}, learn residual dynamics online within model predictive control for contact-rich manipulation~\citep{huang2024adaptive}, or fit a delta action model to bridge sim-to-real gaps for humanoids~\citep{he2025asap}. All require additional training or interactive data collection. APEX requires no additional data collection. It updates existing unknown parameters at test-time from the real-time tracking error.

\textbf{Refining policy outputs at test time.}
Another line refines policy outputs at test time without retraining, mostly targeting action-chunking policies like temporal ensembling~\citep{zhao2023learning}, bidirectional decoding~\citep{liu2024bid}, real-time chunking via inpainting~\citep{black2025rtc}, jerk-minimizing post-optimization~\citep{lipo2025}, and observation-conditioned asynchronous correction~\citep{leave2025observation}. These methods reshape the action stream to address policy-side artifacts such as latency and stale predictions, implicitly assuming the resulting reference is tracked faithfully on hardware. We show this assumption breaks down in Sec.~\ref{sec:replay}. Even a replayed expert reference produces systematic tracking error under a commercial off-the-shelf controller. In contrast, APEX explicitly accounts for execution-side errors.

\section{Problem Formulation}
Modern robot policies are often trained by imitation learning from demonstration trajectories~\citep{pomerleau1988alvinn,
argall2009survey}. Given a demonstration dataset $\mathcal{D} = \{(o_i, a_i)\}_{i=1}^N$ of observation-action pairs with $N$ samples, a policy $\pi$ is trained as a mapping
\begin{equation}
    \pi: o \mapsto a_\pi,
    \label{eq:policy}
\end{equation}
where $o$ denotes the current sensor observation and $a_\pi$ represents a policy-level action such as joint-position targets~\citep{zhao2023learning} or end-effector pose/delta-pose targets~\citep{chi2023diffusion,padalkar2023open,kim2024openvla}. At deployment, the policy produces a time-varying action reference $a_\pi(t)$, which is executed by a low-level controller. Such a controller is often inaccessible to users, particularly on commercial platforms such as UR and xArm.



Let $a(t)$ denote the actual action executed by the controller. The low-level dynamics follow the general \textit{Lagrange} equation with inertia matrix $M$, Coriolis matrix $C$, gravity vector $G$, and control torque $\tau$. Most policy-based architectures adopt a low-level PD controller $\tau_c$ with gains $K_p$ and $K_d$ to track $a_\pi$. Then we can formalize them as
\begin{equation}
    M(a)\ddot{a} + C(a, \dot{a})\dot{a} + G(a) = \tau,
    \quad \tau_c = K_p(a_\pi - a) - K_d\dot{a}.
    \label{eq:vector_dynamics}
\end{equation}
Define the tracking error $e_a = a_\pi - a$. In the absence of desired velocity and acceleration signals of $a_\pi(t)$, as well as the model knowledge, \textit{Lyapunov}'s theorem implies that the above PD controller inevitably incurs noticeable tracking errors~\citep{slotine1991applied}. Moreover, the PD controller is often inaccessible to users, especially on commercial platforms such as UR and xArm. 

\paragraph{Our objective:}
In this work, our goal is to construct a \textit{test-time correction module} that refines the policy output $a_\pi$ into an adjusted command $a_r$, such that feeding $a_r$ into the low-level PD controller yields an accurate tracking of the original $a_\pi$, i.e., $e_a \to 0$. Crucially, to ensure practicality, it is expected to require no modification to either the high-level policy or the low-level controller.

\section{Adaptive Policy Execution}
\label{sec:method}

As shown in Fig.~\ref{fig:teaser}, APEX is a test-time execution adapter that transforms the intended action from the high-level policy into an adjusted command for the low-level execution controller. In this section, we first present APEX under \emph{known} execution dynamics to illustrate its core principle, and then extend the formulation to the \emph{unknown}-dynamics setting by introducing a test-time adaptation algorithm, for which we provide a stability analysis.

\subsection{Correction with Known Execution Dynamics}
\label{sec:ideal_compensation}

We begin with an idealized setting in which the low-level execution dynamics are known. Without loss of generality, we adopt the \textit{joint} coordinate as the action space, so that $a_\pi(t)$ denotes the policy-intended joint angle. With the fixed PD controller of the platform, the closed-loop dynamics of each joint take the form
\begin{equation}
    J\ddot a = K_p(a_r - a) - K_d\dot a + d,
    \label{eq:inner_dynamics}
\end{equation}
where $J$ denotes the inertia of the joint motor and $d(t)$ lumps all unmodeled 
effects, including joint coupling and other nonlinear terms. Compared with the full \textit{Lagrangian} model in Eq.~\eqref{eq:vector_dynamics}, Eq.~\eqref{eq:inner_dynamics} treats each joint as a perturbed single-input channel.

Considering that the final goal is to ensure that the realized motion follows $a_\pi(t)$ exactly, substituting $a = a_\pi$ into Eq.~\eqref{eq:inner_dynamics} and solving for $a_r$ would yield
\begin{equation}
    a_r = a_\pi + \frac{J}{K_p}\,\ddot{a}_\pi + \frac{K_d}{K_p}\,\dot{a}_\pi - \frac{d}{K_p}.
    \label{eq:ideal_ar}
\end{equation}
This expression reveals that faithfully performing $a_\pi$ requires acceleration- and velocity-dependent offsets in the command, which current policies do not provide~\citep{zhao2023learning,chi2023diffusion,pertsch2025pi05}. Eq.~\eqref{eq:ideal_ar} is therefore not directly applicable.

Although finite differencing is a practical remedy in implementation, we adopt an adaptive passive filter~\citep{8735721} here for theoretical completeness. We construct an auxiliary smooth reference $\bar a$ together with its derivatives, generated by the adaptive passive filter driven by the tracking error $e_a = a_\pi - a$
\begin{equation}
    \dot y = -K_1 y - K_1 e_a,
    \qquad
    \dot{\bar a} = -K_2 y,
    \label{eq:abar_filter}
\end{equation}
where $y$ is an internal filter state and $K_1, K_2 > 0$ are filter gains. The trajectory $\bar a$ is obtained by integrating $\dot{\bar a}$. Intuitively, whenever $e_a \neq 0$ the filter drives $\bar a$ toward $a_\pi$ in a smooth, bandwidth-limited manner~\citep{8735721}. At steady state ($e_a \to 0$), $\bar a$ coincides with $a_\pi$, and Eq.~\eqref{eq:abar_filter} simultaneously yields the well-defined $\dot{\bar a}$ and $\ddot{\bar a} = K_1 K_2 (y + e_a)$ that Eq.~\eqref{eq:ideal_ar} requires.

Replacing $a_\pi$ and its derivatives in Eq.~\eqref{eq:ideal_ar} with the filtered counterparts and adding a feedback term to anchor $\bar a$ to $a_\pi$ via $a$, we obtain the \textit{ideal command}
\begin{equation}
    a_r^\star
    =
    \bar a
    + \frac{J}{K_p}\,\ddot{\bar a}
    + \frac{K_d}{K_p}\,\dot{\bar a}
    + \frac{\alpha}{K_p}\,e_a,
    \label{eq:stabilized_ar}
\end{equation}
with a gain $\alpha > 0$. The last term steers the realized motion toward $a_\pi$ and stabilizes the closed loop (convergence is established later). Note the term $d(t)$ is dropped from the design, as obtaining its analytical form is typically nontrivial. We instead adopt the standard assumption in adaptive control~\citep{slotine1991applied} that $d(t)$ is bounded, i.e., $|d(t)| \le \bar d$ for some unknown $\bar d > 0$. 


\subsection{Correction under Unknown Execution Dynamics}
\label{sec:adaptive_design}

Building on Eq.~\eqref{eq:stabilized_ar}, a direct implementation would require knowledge of the parameters $J$, $K_p$, and $K_d$, which are typically inaccessible on closed-source commercial arms. APEX instead introduces a test-time adaptive algorithm~\citep{8735721} that learns these unknown parameters directly from execution feedback. Before proceeding, we re-parameterize Eq.~\eqref{eq:stabilized_ar} as
\begin{equation}
    a_r
    =
    \bar a
    +  w \bigl[ J\,\ddot{\bar a} + \alpha\,e_a\bigr]
    +  w_d\,\dot{\bar a},
    \label{eq:adaptive_ar}
\end{equation}
where $w$ plays the role of $1/K_p$ and $w_d$ corresponds to $K_d/K_p$. These scalar weights together with $J$ are \emph{unknown a priori}. APEX performs a test-time adaptive learning on these coefficients
\begin{equation}
    \dot{\hat w}   = -\Gamma_w\, s \bigl[\hat J\,\ddot{\bar a} + \alpha\,e_a\bigr],
    \qquad
    \dot{\hat w}_d = -\Gamma_{wd}\, s\,\dot{\bar a},
    \qquad
    \dot{\hat J}   = -\Gamma_{J}\, s\,\ddot{\bar a},
    \label{eq:adaptive_law}
\end{equation}
where $\Gamma_w, \Gamma_{wd}, \Gamma_{J} > 0$ are learning rates, $\hat{(\cdot
)}$ denotes the estimation of signal $(\cdot)$, and $s = \dot a - \dot{\bar a}$. Here, $\dot a$ is the action velocity measured from robot state feedback. For joint-space coordinates, this corresponds to the measured joint velocity. For task-space coordinates, it can be obtained from end-effector velocity feedback or computed from joint velocity through the robot \textit{Jacobian}. Importantly, APEX does not differentiate the policy output $a_\pi$. The reference velocity $\dot{\bar a}$ is generated internally by the reconstruction filter in Eq.~\eqref{eq:abar_filter}. 


Eqs.~\eqref{eq:abar_filter},~\eqref{eq:adaptive_ar}, and~\eqref{eq:adaptive_law} together specify the APEX framework. We now establish that the resulting closed-loop system, the joint dynamics~\eqref{eq:inner_dynamics} interconnected with the filter~\eqref{eq:abar_filter}, correction~\eqref{eq:adaptive_ar}, and adaptation laws~\eqref{eq:adaptive_law}, is stable and achieves asymptotic tracking.

\begin{theorem}
\label{thm:nominal_main}
Consider the execution model~\eqref{eq:inner_dynamics} with unknown parameters $J, K_p, K_d > 0$ and a bounded disturbance satisfying 
$|d(t)| \le \bar d$, driven by the corrected command~\eqref{eq:adaptive_ar} with the filtered reference~\eqref{eq:abar_filter} and adaptation law~\eqref{eq:adaptive_law}. Assume that the policy action $a_\pi$ is fixed for the nominal analysis. For any positive gains $K_1, K_2, \alpha, \Gamma_w, \Gamma_{wd}, \Gamma_{J} > 0$ and any finite initial condition, all closed-loop signals remain bounded, and tracking error $e_a$ converges to a residual set, i.e.,
\begin{equation}
        \limsup_{t\to\infty} |s(t)| \;\le\; c_1 \,\bar d ,
        \qquad
        \limsup_{t\to\infty} |e_a(t)| \;\le\; c_2 \, \bar d,
        \label{eq:main_result}
\end{equation}
for constants $c_1, c_2 > 0$ depending on the design gains.
\end{theorem}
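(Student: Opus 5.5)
We substitute the corrected command~\eqref{eq:adaptive_ar} into the execution model~\eqref{eq:inner_dynamics} and rewrite the closed loop in terms of the sliding-type variable $s=\dot a-\dot{\bar a}$. Using $K_p w = 1$ and $K_p w_d = K_d$ for the true parameters, we add and subtract $K_p(w\,[J\ddot{\bar a}+\alpha e_a] + w_d\dot{\bar a})$, which gives
\begin{equation*}
J\dot s = -K_d s - \alpha e_a + K_p\tilde w\,\bigl[\hat J\ddot{\bar a}+\alpha e_a\bigr] + K_p w\,\tilde J\,\ddot{\bar a}\cdot(\text{scaling}) + K_p\tilde w_d\,\dot{\bar a} + K_p(\bar a - a) + d,
\end{equation*}
where $\tilde{(\cdot)}=\hat{(\cdot)}-(\cdot)$. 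Here ``(scaling)'' is a placeholder for the constant multiplying the $\tilde J$ term; pinning it down exactly is part of the bookkeeping below. Because $a_\pi$ is constant, $\dot e_a=-\dot a$. The filter~\eqref{eq:abar_filter} then couples $(y,\bar a)$ to $e_a$, so $\bar a - a$ and $e_a$ are linked by a stable linear system driven by $s$.

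The closed loop therefore splits into two blocks. The first is a mechanical block in $s$ with parameter errors. The second is a linear filter/error block in $(y,\bar a - a_\pi, e_a)$ driven by $s$. For the linear block, we write $\dot e_a=-s-\dot{\bar a}=-s+K_2 y$ and $\dot y=-K_1 y-K_1e_a$. This system is Hurwitz for $K_1,K_2>0$, since its characteristic polynomial is $\lambda^2+K_1\lambda+K_1K_2$. We take a quadratic $V_f=\xi^\top P\xi$ for it, where $\xi=(e_a,y)$, and treat $s$ as its input.

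The main step is a composite Lyapunov function
\begin{equation*}
V=\tfrac12 J s^2 + \tfrac{K_p}{2\Gamma_w}\tilde w^2+\tfrac{K_p}{2\Gamma_{wd}}\tilde w_d^2+\tfrac{c}{2\Gamma_J}\tilde J^2+\beta V_f ,
\end{equation*}
with $c$ and $\beta>0$ chosen so that the adaptive laws~\eqref{eq:adaptive_law} cancel the cross terms $s\tilde w[\cdot]$, $s\tilde w_d\dot{\bar a}$ and $s\tilde J\ddot{\bar a}$ exactly. This cancellation is the reason for the form of~\eqref{eq:adaptive_law}. After it, the remaining terms satisfy
\begin{equation*}
\dot V\le -K_d s^2 -\beta\lambda_{\min}\|\xi\|^2 + c_3|s|\|\xi\| + |s|\,\bar d .
\end{equation*}
The coupling term $c_3|s|\|\xi\|$ collects the contributions of $-\alpha e_a s$ and $K_p(\bar a-a)s$; for the latter we use the identity $\bar a - a = (\bar a - a_\pi)+e_a$. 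We also need $\bar a-a_\pi$ to be expressible through $\xi$, and we check this from~\eqref{eq:abar_filter}. If it fails, we augment $\xi$ with $\bar a-a_\pi$ and use $\ddot{\bar a}=K_1K_2(y+e_a)$. Choosing $\beta$ appropriately and applying Young's inequality then gives $\dot V\le -\mu(s^2+\|\xi\|^2)+\bar d|s|$.

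Because the parameter errors do not appear in the negative part, we do not get strict decrease. Instead, $\dot V<0$ whenever $|s|$ or $\|\xi\|$ exceeds a level proportional to $\bar d$. We argue that $V$ is bounded: when $\bar d=0$, $\dot V\le0$ directly. In general we use a standard ultimate-boundedness argument, possibly adding a $\sigma$-modification-free argument via Barbalat applied to the trajectory-dependent bound, which gives boundedness of all signals. The residual bounds~\eqref{eq:main_result} with $c_1,c_2$ proportional to $1/\mu$ then follow by a comparison argument on the $(s,\xi)$ part, once the parameter errors are known to be bounded.

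I expect the main obstacle to be exactly this last point. With a persistent disturbance and no leakage term, the parameter estimates can drift. Bounding them requires using that the $(s,\xi)$ subsystem is input-to-state stable with respect to $d$, uniformly in the bounded-but-unknown parameter errors. I would first try integrating $\dot V$ over time to show that $\int(s^2+\|\xi\|^2)$ grows at most linearly with slope $O(\bar d^2)$. From that I would extract the limsup bounds, making sure the constants end up depending only on the gains as the statement requires.
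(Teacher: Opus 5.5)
Your Hurwitz analysis of the $(e_a,y)$ block and your reading of the adaptive laws as cancellation devices match the paper. However, the plan breaks at the composite Lyapunov step.

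\textbf{The missing $\zeta$ term.} The coupling $K_p(\bar a-a)s=-K_p\zeta s$, with $\zeta=a-\bar a$, cannot be folded into $c_3|s|\,\|\xi\|$ and dominated. The reason is that $\bar a-a_\pi$ is not a function of $\xi=(e_a,y)$. It is a separate integrator state, $\tfrac{d}{dt}(\bar a-a_\pi)=-K_2y$, or equivalently $\dot\zeta=s$.
\begin{itemize}
\item Your fallback of augmenting $\xi$ with $\bar a-a_\pi$ gives a linear block with a zero eigenvalue. No quadratic $V_f$ for such a block yields a $-\mu\zeta^2$ term.
\item You also cannot create one with an $s\zeta$ cross term in $V$. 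Then $\partial V/\partial s\neq Js$, and the $s$-driven laws~\eqref{eq:adaptive_law} no longer cancel the parametric terms.
\end{itemize}

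The missing idea is to treat $\zeta$ as a spring coordinate. The paper puts $\tfrac12K_p\zeta^2$ into $V$, so that $-K_p\zeta s$ cancels exactly against $K_p\zeta\dot\zeta$. It also weights the filter block as $\tfrac12\alpha e_a^2+\tfrac12\alpha\tfrac{K_2}{K_1}y^2$, which cancels both $\alpha e_a s$ and the $e_a y$ coupling.

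Two bookkeeping corrections to your $s$-equation: the $\alpha e_a$ term carries a plus sign, and the $\tilde J$ coefficient is $K_pw=1$. With your sign this cancellation would not occur.

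The result is $\dot V=-K_ds^2-\alpha K_2y^2$, which is only semidefinite. Convergence then follows from Barbalat's lemma: $s,y\in L_2$ with bounded derivatives, hence $s,y\to0$. The Hurwitz $(e_a,y)$ block driven by the vanishing input $s$ then gives $e_a\to0$. A strict bound $\dot V\le-\mu(s^2+\|\xi\|^2)+\bar d|s|$ is not available in this structure.

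\textbf{The disturbance case.} Your concern here is legitimate, but your plan does not resolve it. The paper adds $s\,d$ to $\dot V$ and applies Young's inequality to get $\dot V\le-\tfrac{K_d}{2}s^2-\alpha K_2y^2+\tfrac{\bar d^2}{2K_d}$. It then simply invokes standard input-to-state stability arguments.

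The negative part of that inequality involves only $s$ and $y$, so the inequality alone does not keep $V$ bounded. It leaves room for exactly the drift of $\zeta$ and $\tilde w,\tilde w_d,\tilde J$ that you describe. The paper's sketch therefore does not settle your worry either.

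Integrating $\dot V$, as you propose, gives only mean-square bounds such as $\tfrac1T\int_0^T s^2\,dt\le \bar d^2/K_d^2+O(1/T)$. These give neither boundedness of the estimates nor the pointwise limsup bounds~\eqref{eq:main_result}. A Barbalat-type argument cannot rescue this, because it presupposes bounded signals.

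A rigorous disturbance result would need something beyond both arguments. Options include a projection, $\sigma$-modification or dead zone in~\eqref{eq:adaptive_law}, or an additional structural argument.
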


\begin{proof}
See Appendix~\ref{sec:appendix_stability} for complete proof details.
\end{proof}

We emphasize that, although several assumptions are introduced for theoretical completeness, our empirical results below show that APEX achieves substantial gains even in regimes where such idealized conditions are unlikely to hold in reality.
\section{Experiments}
\label{sec:exp}


Our central hypothesis is that the test-time execution gap arises not only from high-level policy prediction errors but also from suboptimal execution of the low-level controller. To verify, we design the experiments around three progressively broader questions: (1) How much do low-level controllers contribute to test-time execution failures? (2) Does APEX effectively bridge the joint policy--controller execution gap? (3) If so, does this benefit generalize across policies, action representations, simulators, and platforms? We describe the experimental setups in Appendix~\ref{sec:setup}.


\subsection{Imprecise controller alone can cause significant failures}
\label{sec:replay}

\begin{figure}
    \centering
    \includegraphics[width=\linewidth]{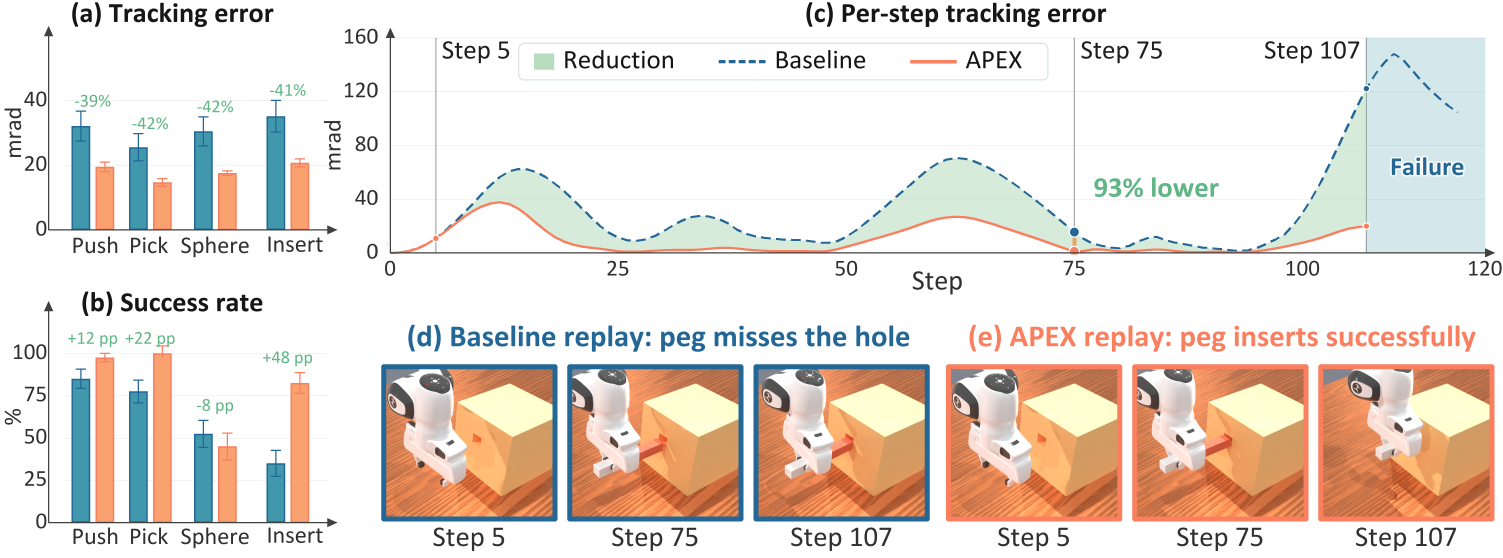}
    \vspace{-10pt}
    \caption{\textbf{Expert-replay benchmark in simulation.} \textbf{(a)} Under a vanilla PD controller, APEX reduces tracking error across all four tasks, leading to improved success rates in \textbf{(b)}. \textbf{(c)} On PegInsertion, APEX reduces the per-step tracking error throughout the trajectory, with up to a 93\% reduction near the insertion phase, enabling the successful insertion shown in \textbf{(d,e)}.}
    \label{fig:replay_combined}
    \vspace{-15pt}
\end{figure}

We first ask to what extent low-level controller execution errors degrade overall task performance. To answer this question, we consider an idealized setting: instead of using actions predicted by a learned policy, we directly replay expert demonstration trajectories as action references. These references represent best-case policy outputs. If execution failures are mainly due to inaccurate policy predictions, then this idealized setting should produce reliable rollouts.

Fig.~\ref{fig:replay_combined} shows that this is not the case. In the following experiments, \textbf{Baseline} denotes uncorrected execution, where the reference command is sent directly to the low-level execution stack without APEX. Under mismatched low-level execution parameters, the realized motion still deviates substantially from the expert reference. As shown in Fig.~\ref{fig:replay_combined}(a), baseline replay produces tracking errors of roughly 25--35 mrad across the four tasks. These deviations are large enough to affect task outcomes. Fig.~\ref{fig:replay_combined}(b) shows that replay success varies widely across tasks and drops sharply on precision-sensitive settings such as PegInsertion. Even with expert-level action references, an inaccurate low-level controller can fail to execute the intended motion accurately.

Without changing the replayed reference, APEX reduces the mean tracking error from 30.84 mrad to 18.15 mrad across all replay settings, a 41.2\% reduction. Replay success also increases from 62.5\% to 81.3\%. The PegInsertion tracking error in Fig.~\ref{fig:replay_combined}(c) shows APEX keeps the realized motion closer to the expert reference near insertion. The visual rollouts in Fig.~\ref{fig:replay_combined}(d,e) show the corresponding task outcome, where the baseline misses the hole while the APEX-corrected replay inserts successfully. 
The same effect holds on a physical UR5, as presented in Fig.~\ref{fig:replay_real_robot}. Across three real-world precision tasks, APEX reduces per-step tracking error and turns failed insertions into successful ones, confirming that the simulation findings generalize to hardware.

\begin{figure}[t]
    \centering
    \includegraphics[trim=0cm 0cm 0cm 5.8cm, clip, width=0.92\linewidth]{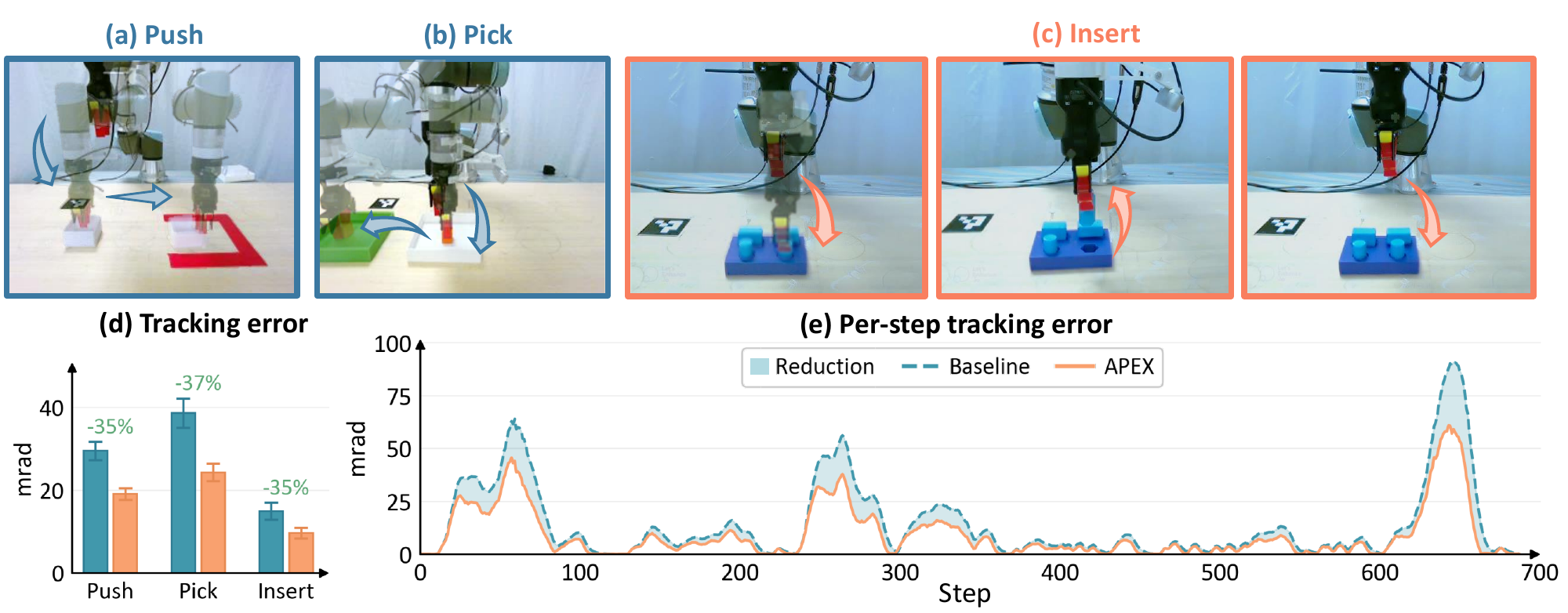}
    \vspace{-10pt}
    \caption{\textbf{Real-world deployment on a UR5.} APEX is evaluated on three precision-sensitive manipulation tasks executed on UR5:
    \textbf{(a)}~Push, \textbf{(b)}~Pick, and \textbf{(c)}~Insert, shown here
    as the recorded execution sequences. \textbf{(d)}~Across all three tasks,
    APEX reduces the tracking error under the robot's native position servo. \textbf{(e)}~On PegInsertion, APEX lowers the per-step tracking error. 
    }
    \label{fig:replay_real_robot}
    \vspace{-12pt}
\end{figure}

\subsection{APEX reliably bridges the policy--controller gap}
\label{sec:act_flagship}

\begin{figure}
  \centering
  \includegraphics[width=\linewidth]{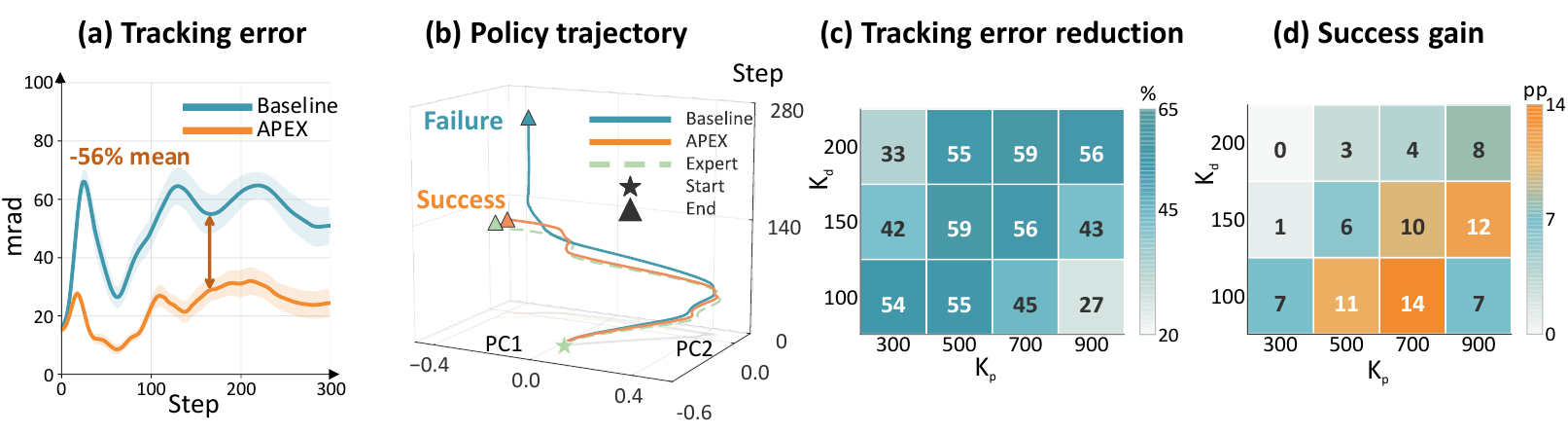}
  \vspace{-20pt}
  \caption{\textbf{Deployment with ACT policy on PegInsertion.} \textbf{(a)} Action tracking error. \textbf{(b)} PCA visualization of the joint-action trajectory, where PC1 and PC2 denote the first two principal components of joint positions, and steps are shown on the vertical axis. \textbf{(c)} Tracking-error reduction over the $K_p \times K_d$ grid. \textbf{(d)} Success-rate gain over the same grid.}
  \label{fig:act_execution_evidence}
  \vspace{-1em}
\end{figure}

In practice, no policy can output actions perfectly aligned with the demonstrations, which compounds the execution gap. We next evaluate whether APEX can reduce this action execution gap when the action reference is generated by a learned policy. Specifically, we use the ACT~\citep{zhao2023learning} as learned policy. In this setting, the tracking error is measured between the policy-intended action reference and the executed action. The results are provided in Fig.~\ref{fig:act_execution_evidence}. APEX substantially reduces tracking error from 51.55 mrad to 25.95 mrad across all ACT rollouts. Fig.~\ref{fig:act_execution_evidence}(a) and Fig.~\ref{fig:act_execution_evidence}(b) illustrate that the realized trajectory under APEX remains closer to the policy reference over time, whereas direct execution accumulates larger deviations.

We then examine whether this execution-level correction is robust across different low-level controller settings. Fig.~\ref{fig:act_execution_evidence}(c) reports the tracking-error reduction on PegInsertion over the evaluated \(K_p \times K_d\) settings. APEX reduces the policy-reference tracking error in every shown setting, indicating that the adapter is not tuned to a single PD configuration but remains effective across a range of inaccurate test-time controllers. The reduction in tracking error naturally leads to the success-rate gain, as shown in Fig.~\ref{fig:act_execution_evidence}(d). The gains are larger under more inaccurate control settings. Moreover, across all four ACT tasks and 48 controller settings, mean success increases from 27.4\% to 40.6\%. APEX improves 40 settings, leaves 5 unchanged, and degrades only 3. The full per-task grids are provided in Appendix~\ref{app:act_full_grid}.

\begin{table*}[t]
\centering
\small
\setlength{\tabcolsep}{5.2pt}
\renewcommand{\arraystretch}{1.15}
\caption{
Cross-policy success comparison under the same fixed controller mismatch. $\Delta$ reports the success-rate gain. All values are success rates in \%, and $\Delta$ is in percentage points (pp). 
}
\label{tab:cross_policy_success}
\vspace{0.5em}
\begin{tabular}{lccc ccc ccc}
\toprule
& \multicolumn{3}{c}{\textbf{ACT}}
& \multicolumn{3}{c}{\textbf{FM}}
& \multicolumn{3}{c}{\textbf{DP}} \\
\cmidrule(lr){2-4}
\cmidrule(lr){5-7}
\cmidrule(lr){8-10}
\textbf{Task}
& \textbf{Baseline} & \textbf{APEX} & \textbf{$\Delta$}
& \textbf{Baseline} & \textbf{APEX} & \textbf{$\Delta$}
& \textbf{Baseline} & \textbf{APEX} & \textbf{$\Delta$} \\
\midrule
PushCube-v1
& 97 & 97 & +0
& 90 & 94 & +4
& 88 & 98 & +10 \\

PickCube-v1
& 41 & 75 & +34
& 37 & 36 & -1
& 35 & 53 & +18 \\

PlaceSphere-v1
& 15 & 6  & -9
& 0  & 13 & +13
& 0  & 40 & +40 \\

PegInsertionSide-v1
& 0  & 11 & +11
& 0  & 3  & +3
& 0  & 2  & +2 \\
\bottomrule
\end{tabular}
\vspace{-1em}
\end{table*}

\subsection{APEX generalizes across policies and platforms}
\label{sec:cross_policy}

\begin{table}[t]
\centering
\small
\setlength{\tabcolsep}{4.5pt}
\renewcommand{\arraystretch}{1.05}
\caption{
Real-robot success rate  (\%) on a UR5 low-level controller.
$\Delta$ reports the success-rate change in pp. 
}
\label{tab:real_robot_success}
\begin{tabular}{lccc ccc}
\toprule
& \multicolumn{3}{c}{\textbf{FM}}
& \multicolumn{3}{c}{\textbf{DP}} \\
\cmidrule(lr){2-4}
\cmidrule(lr){5-7}
\textbf{Task}
& \textbf{Baseline} & \textbf{APEX} & \textbf{$\Delta$}
& \textbf{Baseline} & \textbf{APEX} & \textbf{$\Delta$} \\
\midrule
PushBox
& 55& 45& -10& 75& 80& +5\\
PickCube
& 55& 65& +10& 40& 45& +5\\
PegInsertion
& 20& 35& +15
& 30&  25& -5\\
\bottomrule
\end{tabular}
\end{table}

We next examine whether this benefit is tied to that particular policy-controller setting or whether it remains observable across different policy families, action representations, and benchmark platforms. We first evaluate ACT~\citep{zhao2023learning}, Diffusion Policy (DP)~\citep{chi2023diffusion}, and Flow Matching (FM)~\citep{flowmatching} on ManiSkill under the same controller. As shown in Tab.~\ref{tab:cross_policy_success}, APEX improves performance across all three policy classes, with average gains of
+9.0~pp on ACT, +4.8~pp on FM, and +17.5~pp on DP over the four tasks. These
results indicate that the benefit of execution-side correction is not specific to ACT.



We further test the same idea on a different benchmark platform, LIBERO Spatial, using the $\pi_{0.5}$~\citep{pertsch2025pi05}. In this setting, we change the simulator backend, task distribution, and action representation simultaneously. Unlike the ManiSkill policies, $\pi_{0.5}$ outputs end-effector delta actions, which are mapped to joint-space commands through inverse kinematics before execution correction is applied. Across the evaluated LIBERO tasks, APEX reduces the mean end-effector tracking error from 8.0 mm to 4.9 mm and improves the mean success rate from 72.1\% to 97.9\%. This result suggests that the observed benefit is not restricted to joint-position policies. Full results are provided in
Appendix~\ref{app:pi05_libero_kp}.

Finally, we validate APEX on a UR5 across three real-robot tasks,
where we collect demonstrations and train both DP and FM policies with
end-effector delta action outputs to assess transfer beyond simulation. As shown in Tab.~\ref{tab:real_robot_success}, APEX improves the
mean success rate for both policy classes, with the largest gain on the most
precision-sensitive task, PegInsertion, where it raises the FM success rate from
20\% to 35\% (+15 pp). 

\subsection{Ablation Study}
\label{sec:ablation}

\begin{figure}[t]
  \centering
  \includegraphics[width=0.82\linewidth]{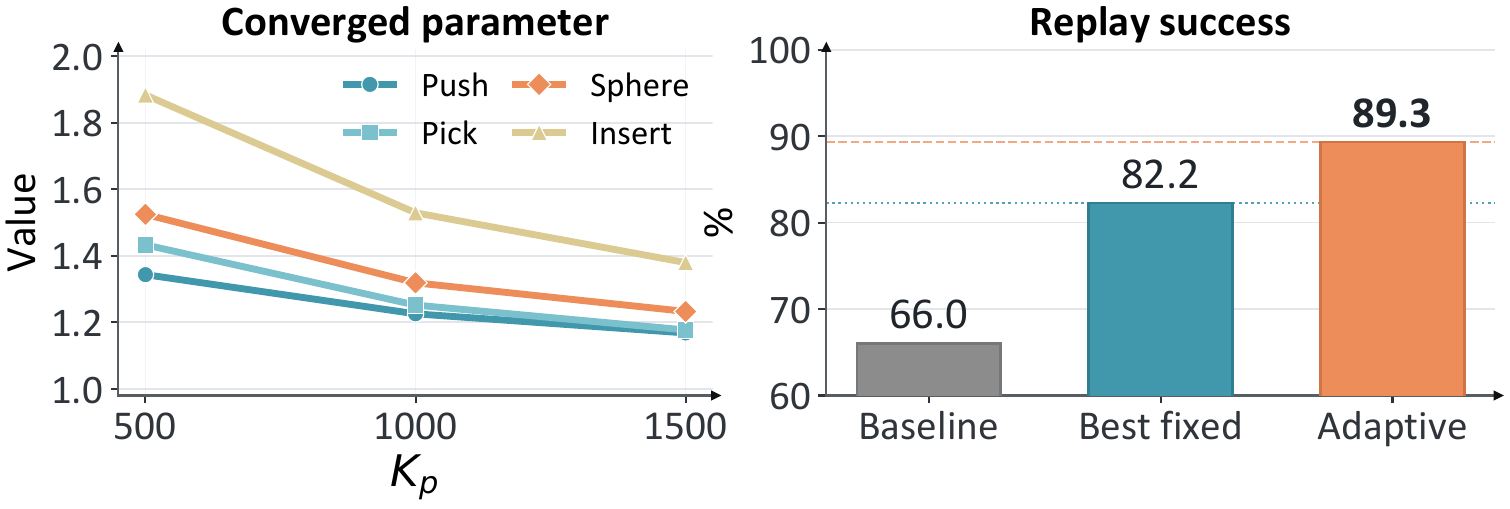}
  \caption{\textbf{Adaptive-update ablation.} \textbf{Left:} Convergence behavior under varying control gains. \textit{Value} denotes the effective multiplier on the position-error correction, corresponding to $1+w\alpha$ in Eq.~\eqref{eq:stabilized_ar}. 1 means no extra correction. It decreases with $K_p$, showing that APEX adapts a stronger correction when tracking is degraded. \textbf{Right:} Success rates for direct replay, the best fixed-parameter APEX, and adaptive APEX.}
  \label{fig:adaptation_evidence}
\end{figure}

This ablation asks whether the adaptive update in APEX is doing useful work or merely acting as a fixed offset. Further ablations on the adaptation gains and simulator-side joint dynamics parameters, including friction loss and armature, are deferred to the Appendix~\ref{app:physics-perturbation}.

We first perform a grid search over fixed correction coefficients with online adaptation disabled throughout execution (Appendix~\ref{app:fixed_vs_adaptive_sweep}, Tab.~\ref{tab:fixed_sweep_all_peg}). The best fixed-correction setting achieves a mean success rate of 82.2\% across the swept configurations. We compare this fixed-parameter variant against adaptive APEX in Fig.~\ref{fig:adaptation_evidence}. The adaptive variant reaches 89.3\%, surpassing both the uncorrected baseline and the best fixed-parameter variant. From an adaptive-control perspective, this finding is consistent with a classical insight in adaptive control. Control-oriented adaptation driven by closed-loop tracking error can outperform regression-oriented identification~\citep{richards2023control}.

\section{Limitations}
\label{sec:limitations}

One limitation of APEX is that it involves several adaptation gains, which currently require manual tuning. We plan to address this by integrating self-tuning mechanisms that adjust these gains online based on the observed tracking behavior. In addition, the disturbance term $d(t)$ in Eq.~\eqref{eq:inner_dynamics} is omitted in the current design, leading to a bounded convergence. In principle, $d(t)$ could be estimated online via a disturbance observer~\citep{jia2025foreseer} to reduce conservatism. 
We leave such an extension to future work.


\section{Conclusion}
\label{sec:conclusion}

We presented \textbf{APEX}, a test-time correction layer that bridges the \textit{execution gap} between high-level learned policies and low-level inaccessible controllers without modifying either. APEX reconstructs higher-order signals via a dynamic filter and refines the reference through an online adaptive law with stability guarantees. Empirically, APEX raises replay success from 62.5\% to 81.3\% and yields 4.8--25.8~pp improvements across four policy classes, including ACT, flow matching, diffusion policy, and $\pi_{0.5}$. These results suggest that even a well-trained policy can fail simply because its action references are not faithfully executed, and that a lightweight correction layer at the policy-controller interface can recover much of this lost performance.

\bibliographystyle{assets/plainnat}
\bibliography{paper}

\clearpage
\appendix

\section*{Appendices}
\addcontentsline{toc}{section}{Appendices} 

Within this supplementary material, we elaborate on the following aspects:
\vspace{1em}

\begin{itemize}
  \item \textbf{Appendix A:} Stability Analysis of Theorem~\ref{thm:nominal_main}
    \begin{itemize}
      \item \textbf{A.1:} Closed-Loop Error Dynamics
      \item \textbf{A.2:} Proof of Theorem~\ref{thm:nominal_main}
    \end{itemize}
    \item \textbf{Appendix B:} Experimental Setup
  \begin{itemize}
      \item \textbf{B.1:} Simulation Environments
      \item \textbf{B.2:} Real-robot Environments
      \item \textbf{B.3:} Evaluation Metrics
    \end{itemize}
  \item \textbf{Appendix C:} Implementation Details of APEX
    \begin{itemize}
        \item \textbf{C.1:} Hyperparameters
    \end{itemize}
  \item \textbf{Appendix D:} Fixed Correction Cannot Replace Test-time Adaptation
  \item \textbf{Appendix E:} Physics Perturbation Robustness
  \item \textbf{Appendix F:} ACT ManiSkill Results
  \item \textbf{Appendix G:} $\pi_{0.5}$ LIBERO Results
\end{itemize}

\clearpage

\section{Stability Analysis of Theorem~\ref{thm:nominal_main}}
\label{sec:appendix_stability}

This appendix proves Theorem~\ref{thm:nominal_main} from 
Sec.~\ref{sec:adaptive_design}. The proof first establishes asymptotic convergence under the 
nominal setting ($d \equiv 0$), then extends to the bounded-disturbance 
case via \textit{Young}'s inequality.

\subsection{Closed-Loop Error Dynamics}
\label{sec:appendix_dynamics}

We analyze a single execution channel. The scalar nominal model 
($d \equiv 0$) is
\begin{equation}
    J\ddot a = K_p(a_r - a) - K_d \dot a,
    \label{eq:appendix_inner_dynamics}
\end{equation}
where $a$ is the realized action coordinate, $a_r$ is the adjusted 
command sent to the low-level controller, and $J, K_p, K_d > 0$ are 
unknown execution parameters. The policy provides a fixed desired 
command $a_\pi$. Following the main text, define the tracking error 
and sliding variable
\begin{equation}
    e_a = a_\pi - a, 
    \qquad 
    s = \dot a - \dot{\bar a},
\end{equation}
with the auxiliary smooth reference $\bar a$ generated by
\begin{equation}
    \dot y = -K_1 y - K_1 e_a, 
    \qquad 
    \dot{\bar a} = -K_2 y, 
    \qquad 
    \ddot{\bar a} = K_1 K_2(y + e_a).
    \label{eq:appendix_filter}
\end{equation}
For the Lyapunov construction we also introduce the auxiliary positional error
\begin{equation}
    \zeta = a - \bar a, 
    \label{eq:appendix_zeta}
\end{equation}
which directly links the sliding variable $s$ (i.e., $\dot \zeta = s$) to a {positional} quantity.

APEX uses the adaptive command
\begin{equation}
    a_r = \bar a 
    + \hat w \bigl[\hat J \ddot{\bar a} + \alpha e_a\bigr]
    + \hat w_d \dot{\bar a},
    \label{eq:appendix_adaptive_command}
\end{equation}
with online updates
\begin{equation}
    \dot{\hat w} = -\Gamma_w s\bigl[\hat J \ddot{\bar a} + \alpha e_a\bigr],
    \qquad
    \dot{\hat w}_d = -\Gamma_{wd} s \dot{\bar a},
    \qquad
    \dot{\hat J} = -\Gamma_J s \ddot{\bar a}.
    \label{eq:appendix_adaptive_law}
\end{equation}
Eqs.~\eqref{eq:appendix_adaptive_command}--\eqref{eq:appendix_adaptive_law} 
are identical to Eqs.~\eqref{eq:adaptive_ar}--\eqref{eq:adaptive_law} 
in the main text. Define the parameter estimation errors
\begin{equation}
    \tilde w = \hat w - w, 
    \qquad 
    \tilde w_d = \hat w_d - w_d, 
    \qquad 
    \tilde J = \hat J - J,
\end{equation}
where $w = 1/K_p$ and $w_d = K_d/K_p$.
The proof only requires finite initial conditions. In implementation, 
we initialize $\bar a(0) = a(0)$, giving $\zeta(0) = 0$ and reducing 
the initial transient.

Substituting Eq.~\eqref{eq:appendix_adaptive_command} into 
Eq.~\eqref{eq:appendix_inner_dynamics} and expanding 
$\hat w = w + \tilde w$, $\hat w_d = w_d + \tilde w_d$, 
$\hat J = J + \tilde J$, the $s$-dynamics become
\begin{equation}
\begin{aligned}
    J \dot s
    &= -K_d s - K_p \zeta + \alpha e_a \\
    &\quad + K_p \tilde w \bigl[\hat J \ddot{\bar a} + \alpha e_a\bigr]
    + K_p \tilde w_d \dot{\bar a}
    + \tilde J \ddot{\bar a}.
\end{aligned}
\label{eq:appendix_s_dyn}
\end{equation}
The parametric uncertainty enters in three places (one per adaptive 
coefficient), each multiplied by a measurable regressor. Since $a_\pi$ 
is fixed, $\dot e_a = -\dot a$, and the filter-state dynamics reduce to
\begin{equation}
    \dot e_a = -s + K_2 y,
    \label{eq:appendix_ea_dyn}
\end{equation}
\begin{equation}
    \dot y = -K_1 y - K_1 e_a.
    \label{eq:appendix_y_dyn}
\end{equation}

\subsection{Proof of Theorem~\ref{thm:nominal_main}}
\label{sec:appendix_proof}

\paragraph{Lyapunov candidate.}
We choose
\begin{equation}
\begin{aligned}
    V 
    &= \underbrace{\tfrac{1}{2} J s^2}_{\text{sliding}}
    + \underbrace{\tfrac{1}{2} K_p \zeta^2}_{\text{position}}
    + \underbrace{\tfrac{1}{2} \alpha e_a^2 
    + \tfrac{1}{2} \alpha \tfrac{K_2}{K_1} y^2}_{\text{filter subsystem}} \\
    &\quad + \underbrace{\tfrac{K_p}{2\Gamma_w}\tilde w^2 
    + \tfrac{K_p}{2\Gamma_{wd}}\tilde w_d^2 
    + \tfrac{1}{2\Gamma_J}\tilde J^2}_{\text{parameter errors}}.
\end{aligned}
\label{eq:appendix_lyapunov}
\end{equation}
The filter-subsystem terms are weighted to cancel the cross-coupling 
between $e_a$ and $y$ in Eqs.~\eqref{eq:appendix_ea_dyn}--\eqref{eq:appendix_y_dyn} 
in the derivative. The parameter-error weights match the inverse learning 
rates so that the adaptive law in 
Eq.~\eqref{eq:appendix_adaptive_law} cancels the parametric terms in 
$\dot s$. Since $J, K_p, \alpha, K_1, K_2, \Gamma_w, \Gamma_{wd}, 
\Gamma_J > 0$, $V$ is positive definite in the error variables.

\paragraph{Derivative along trajectories.}
Differentiating Eq.~\eqref{eq:appendix_lyapunov} along 
Eqs.~\eqref{eq:appendix_s_dyn}--\eqref{eq:appendix_y_dyn}, the state 
cross terms cancel
\begin{equation}
    -K_p \zeta s + K_p \zeta s = 0,
    \qquad
    +\alpha s e_a + \alpha e_a(-s) = 0,
\end{equation}
\begin{equation}
    \alpha K_2 e_a y - \alpha K_2 y\, e_a = 0.
\end{equation}
The parameter-error products are cancelled by the adaptive 
law~\eqref{eq:appendix_adaptive_law}
\begin{equation}
\begin{aligned}
&K_p s \tilde w \bigl(\hat J \ddot{\bar a} + \alpha e_a\bigr)
+ \tfrac{K_p}{\Gamma_w} \tilde w \dot{\hat w} = 0, \\
&K_p s \tilde w_d \dot{\bar a}
+ \tfrac{K_p}{\Gamma_{wd}} \tilde w_d \dot{\hat w}_d = 0, \\
&s \tilde J \ddot{\bar a}
+ \tfrac{1}{\Gamma_J} \tilde J \dot{\hat J} = 0.
\end{aligned}
\end{equation}
The remaining terms yield
\begin{equation}
    \dot V = -K_d s^2 - \alpha K_2 y^2 \le 0.
    \label{eq:appendix_Vdot_final}
\end{equation}

\paragraph{Boundedness.}
Since $\dot V \le 0$, $V$ is non-increasing, and all error variables
\begin{equation}
    s,\, \zeta,\, e_a,\, y,\, \tilde w,\, \tilde w_d,\, \tilde J
\end{equation}
are bounded. With $a_\pi$ fixed, bounded $e_a = a_\pi - a$ implies 
bounded $a$. Bounded $\zeta = a - \bar a$ then implies bounded 
$\bar a$. From Eq.~\eqref{eq:appendix_filter}, bounded $y$ and 
$e_a$ imply bounded $\dot{\bar a}, \ddot{\bar a}$. Boundedness of 
$\hat w, \hat w_d, \hat J$ follows from bounded 
$\tilde w, \tilde w_d, \tilde J$. Finally, 
Eq.~\eqref{eq:appendix_adaptive_command} gives bounded $a_r$, and 
$\dot a = s + \dot{\bar a}$ is bounded. Thus all closed-loop signals 
remain bounded.

\paragraph{Asymptotic convergence via \textit{Barbalat}'s lemma.}
Bounded right-hand sides in Eqs.~\eqref{eq:appendix_s_dyn} and 
\eqref{eq:appendix_y_dyn} make $\dot s, \dot y$ bounded, so $s$ and 
$y$ are uniformly continuous. Integrating 
Eq.~\eqref{eq:appendix_Vdot_final} from $0$ to $\infty$,
\begin{equation}
    K_d \int_0^\infty s^2(\tau)\, d\tau 
    + \alpha K_2 \int_0^\infty y^2(\tau)\, d\tau 
    \le V(0) - V(\infty) < \infty,
\end{equation}
gives $s, y \in L_2$. Combined with uniform continuity, 
Barbalat's lemma~\citep{khalil2002nonlinear} yields
\begin{equation}
    \lim_{t\to\infty} s(t) = 0, 
    \qquad 
    \lim_{t\to\infty} y(t) = 0.
\end{equation}

\paragraph{Convergence of the tracking error.}
The $(e_a, y)$ subsystem from 
Eqs.~\eqref{eq:appendix_ea_dyn}--\eqref{eq:appendix_y_dyn} is
\begin{equation}
\frac{d}{dt}
\begin{bmatrix}e_a \\ y\end{bmatrix} 
= 
\begin{bmatrix}0 & K_2 \\ -K_1 & -K_1\end{bmatrix}
\begin{bmatrix}e_a \\ y\end{bmatrix} 
+ 
\begin{bmatrix}-1 \\ 0\end{bmatrix} s.
\label{eq:appendix_linear_subsystem}
\end{equation}
The system matrix has trace $-K_1 < 0$ and determinant $K_1 K_2 > 0$, 
hence is Hurwitz for any $K_1, K_2 > 0$. Since $s(t) \to 0$ acts as a 
vanishing input and the state is bounded, standard input-to-state 
arguments for LTI systems~\citep{khalil2002nonlinear} give
\begin{equation}
    \lim_{t\to\infty} e_a(t) = 0,
\end{equation}
which together with $s(t) \to 0$ completes the nominal proof.

\paragraph{Extension to bounded disturbances.}
When the lumped disturbance $d(t)$ in Eq.~\eqref{eq:inner_dynamics} is 
nonzero but bounded, $|d(t)| \le \bar d$ with $\bar d > 0$, the 
$s$-dynamics in Eq.~\eqref{eq:appendix_s_dyn} acquire an additional 
term $d(t)$ on the right-hand side, and the \textit{Lyapunov} derivative 
becomes
\begin{equation}
    \dot V = -K_d s^2 - \alpha K_2 y^2 + s\, d(t).
\end{equation}
Applying \textit{Young}'s inequality, 
$s\, d(t) \le \tfrac{K_d}{2} s^2 + \tfrac{1}{2 K_d} d^2(t) 
\le \tfrac{K_d}{2} s^2 + \tfrac{\bar d^2}{2 K_d}$, yields
\begin{equation}
    \dot V \le -\tfrac{K_d}{2} s^2 - \alpha K_2 y^2 
    + \tfrac{\bar d^2}{2 K_d}.
    \label{eq:appendix_Vdot_disturbance}
\end{equation}
Standard input-to-state stability 
arguments~\citep{khalil2002nonlinear} then ensure that all closed-loop 
signals remain bounded and the sliding variable and tracking error 
converge to a residual set whose size scales linearly with the 
disturbance bound:
\begin{equation}
    \limsup_{t\to\infty} |s(t)| \le c_1\, \bar d,
    \qquad
    \limsup_{t\to\infty} |e_a(t)| \le c_2\, \bar d,
\end{equation}
for constants $c_1, c_2 > 0$ depending on the design gains. As 
$\bar d \to 0$, the residual set shrinks to the origin, recovering 
the nominal result $s(t), e_a(t) \to 0$. \qed

Theorem~\ref{thm:nominal_main} assumes a fixed $a_\pi$, while a 
deployed policy issues a sequence of commands. At each policy update, 
the physical states ($a, \dot a$), filter state $y$, reference 
$\bar a$, command $a_r$, and adaptive parameters $\hat w, \hat w_d, 
\hat J$ remain finite. Only $e_a = a_\pi - a$ jumps when $a_\pi$ 
changes. Provided the policy outputs are bounded, each held-command 
interval begins from a finite initial condition, and 
Theorem~\ref{thm:nominal_main} applies interval-wise. This yields 
stabilization and error attenuation within each interval rather than 
global convergence to an arbitrary time-varying sequence. The 
empirical evaluation in Sec.~\ref{sec:exp} measures the resulting 
finite-time error reduction over full rollouts.


\section{Experimental Setup}
\label{sec:setup}
\FloatBarrier


This appendix provides the implementation details behind the experimental results in the main paper. For each experiment family, we specify the simulator or hardware platform, task suite, policy class, low-level execution interface, and evaluation protocol used for comparing direct execution with APEX.

\subsection{Simulation Environments}

Fig.~\ref{fig:sim_tasks} summarizes the simulated environments, which include four ManiSkill tasks and three LIBERO Spatial tasks. In the LIBERO experiments, we evaluate the $\pi_{0.5}$ VLA, whose end-effector delta actions are first converted into joint-space commands through inverse kinematics before being passed to the APEX correction layer.

\renewcommand*{\thefigure}{S1}
\begin{figure}[H]
    \centering
    \includegraphics[width=\linewidth]{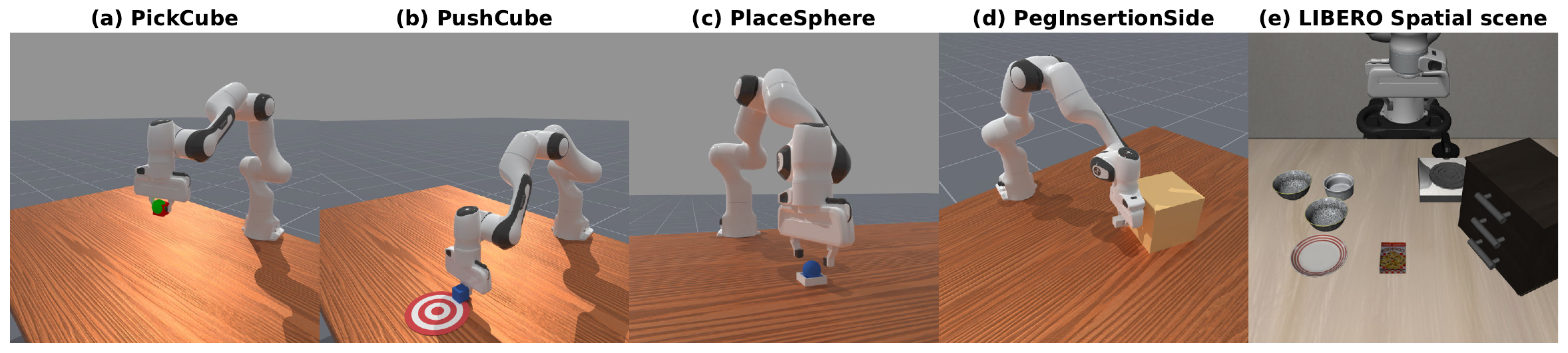}
    \caption{Simulation tasks and scenes used in our experiments.}
    \label{fig:sim_tasks}
\end{figure}

\subsection{Real-robot Environments}
\paragraph{Hardware platform.}
All real-robot experiments are conducted on a 6-DoF UR5 manipulator equipped with a Robotiq Hand-E adaptive parallel-jaw gripper. Visual observations are captured by two Intel RealSense D435i (eye-in-hand) \& D455 (eye-to-hand) cameras. Policy inference and execution-side correction run on a workstation with an NVIDIA RTX 3090.

\paragraph{Control stack.}
The DP and FM policies output end-effector delta actions, which are mapped to joint-space commands via inverse kinematics before execution-side correction. As a commercial platform, the UR5 exposes only a position-command interface and does not expose or allow modification of its internal controller gains. The execution-side terms of APEX are therefore estimated online from the measured tracking error rather than from known controller parameters. 

\paragraph{Tasks \& dataset.}
We evaluate three manipulation tasks of increasing precision demand: PushBox (planar pushing of a box to a goal region), PickCube (grasping a cube and placing it in a box), and PegInsertion (inserting a peg into a hole whose radius exceeds the peg
radius by 3\,mm). For each task, we collect 60 demonstrations via teleoperation at 30\,Hz, recording RGB images and proprioceptive joint states. The tasks differ markedly in how strongly their outcome depends on execution accuracy: PushBox succeeds under
coarse motion, whereas PegInsertion is dominated by millimeter-level alignment and
is therefore the most sensitive to residual tracking error.


\paragraph{Evaluation protocol.}
We train one DP~\citep{chi2023diffusion}
and one FM~\citep{flowmatching} policy per task using the best validation epoch.
Each task-policy pair is evaluated over 20 trials. Each episode is capped at 200 inference steps, beyond which the trial is counted as a failure.
The Baseline corresponds to the original policy executed by the joint-position servo
without APEX. All other factors (controller, initial-pose distribution,
success criterion) are held identical, so that APEX is the only variable.


\subsection{Evaluation Metrics}
We evaluate each method using three metrics: success rate, action-tracking error, and paired outcome changes. The tracking error measures the discrepancy between the action reference and the realized action. The reference is the expert trajectory in replay experiments and the policy output in learned-policy rollouts. For paired outcome changes, we count both cases where APEX turns a failed baseline rollout into a success and cases where it turns a successful baseline rollout into a failure. ManiSkill results report joint-space tracking error in mrad, whereas LIBERO results report end-effector tracking error in millimeters; we therefore summarize them separately. Each ManiSkill learned-policy setting is evaluated with 100 paired episodes, and each LIBERO $\pi_{0.5}$ setting with 20 paired episodes. When computing ManiSkill tracking-error summaries, we exclude rare anomalous steps where the policy command, executed command, or observed joint state exceeds 4 rad. This removes obvious policy-output or simulator failures from the controller-tracking metric, while leaving terminal success evaluation unchanged.

\section{Implementation Details of APEX}
\label{sec:appendix_impl}

The dynamic filter in Sec.~\ref{sec:method} is introduced for theoretical completeness: it provides smooth reference derivatives and yields a clean stability analysis. In the experiments, we use a simpler discrete-time realization that preserves the same adapter role but estimates derivatives by finite differences and uses a re-parameterized set of bounded adaptive coefficients. We found this implementation robust across the replay, learned-policy, and controller-sweep settings.

At each control step, APEX receives the policy-level action reference $a_\pi$ and the realized action state $(a,\dot a)$. Using the tracking error $e_a=a_\pi-a$, the adapter sends the corrected command
\begin{equation}
    a_r
    =
    a_\pi
    + \hat\alpha e_a
    + \hat\theta_v \dot a
    + \hat\theta_a \ddot a_\pi^{\rm fd},
    \label{eq:impl_ar}
\end{equation}
where $\ddot a_\pi^{\rm fd}$ is a finite-difference estimate from recent policy references. The adaptive coefficients $\hat\alpha$, $\hat\theta_v$, and $\hat\theta_a$ correspond to position-, velocity-, and acceleration-level correction terms. The coefficients are updated online from tracking feedback, following the same adaptation principle as Sec.~\ref{sec:method}.

\subsection{Hyperparameters}
\label{sec:appendix_hyperparams}

Tab.~\ref{tab:apex_hyperparams} summarizes the main APEX hyperparameters used in replay and ACT controller-grid experiments. We use a single adaptive preset across these settings. All correction coefficients are initialized from zero, so the gains come from online adaptation rather than a tuned warm start. The update gains control the adaptation rates of the position-, velocity-, and acceleration-level terms, $\lambda_s$ denotes the sliding-variable bandwidth, and $c_{\max}=0.10$ denotes the maximum absolute command compensation used to clip the final correction for improved robustness.

\renewcommand*{\thetable}{S1}
\begin{table}[t]
\centering
\small
\setlength{\tabcolsep}{8pt}
\renewcommand{\arraystretch}{1.12}
\caption{Main APEX hyperparameters.}
\label{tab:apex_hyperparams}
\begin{tabular}{lc}
\toprule
\textbf{Parameter} & \textbf{Value} \\
\midrule
$\hat\alpha_0$ & $0$ \\
$\hat\theta_{v,0}$ & $0$ \\
$\hat\theta_{a,0}$ & $0$ \\
$\gamma_\alpha$ & $30$ \\
$\gamma_v$ & $0.1$ \\
$\gamma_a$ & $0.1$ \\
$\lambda_s$ & $30.0$ \\
$c_{\max}$ & $0.10$ \\
\bottomrule
\end{tabular}
\end{table}

\section{Fixed Correction Cannot Replace Test-time Adaptation}
\label{app:fixed_vs_adaptive_sweep}

This section examines whether the gains of APEX can be attributed to online adaptation rather than to a fixed correction preset. If the correction with constant coefficients were sufficient, a single set of correction coefficients should remain effective when the same reference trajectory is executed under different low-level controller gains.

We conduct this ablation on PegInsertionSide-v1 replay, where the action reference is an expert motion-planning trajectory, and policy prediction error is removed. We vary the test-time proportional gain $K_p \in\{500,1000,1500\}$ while keeping $K_d=100$ fixed. For the fixed-correction variant, all adaptive updates are disabled and the coefficients are swept over
\[
\hat\alpha\in\{0,0.2,0.4,0.6\},\quad
\hat\theta_v\in\{0,0.05,0.1,0.2\},\quad
\hat\theta_a\in\{0,0.01,0.02,0.05\}.
\]
This yields 64 fixed configurations, each evaluated with 30 paired replay episodes. Adaptive APEX is evaluated from the neutral initialization $\hat\alpha_0=\hat\theta_{v,0}=\hat\theta_{a,0}=0$, using $\gamma_\alpha=30$, $\gamma_a=0.1$, and $\gamma_v=0.05$. Thus, any improvement of adaptive APEX must arise from online parameter updates during execution rather than from a tuned warm start.

The sweep shows that fixed correction is only effective within a limited region of the controller-gain space. The best fixed configuration achieves a mean improvement of $+16.7$ pp across the evaluated $K_p$ values, whereas adaptive APEX reaches $+23.3$ pp despite starting from zero correction. These results support the claim that APEX benefits from adapting to the current execution dynamics online, rather than applying a transferable fixed correction.

\FloatBarrier
\section{Physics Perturbation Robustness}
\label{app:physics-perturbation}

We conduct this study on PegInsertionSide-v1 replay with $K_p=700$, a mildly under-stiffened controller setting where direct replay is possible but unreliable. The expert reference, controller gains, and rollout seeds are kept fixed, and only the joint-level physics parameters are varied. We consider two perturbation families. First, we vary the dimensionless joint Coulomb friction coefficient over $\{0,5,10,15,20,25,30\}$ while keeping armature fixed; this coefficient is the simulator's dry-friction parameter, which bounds the friction torque opposing joint motion in proportion to the joint load, so a larger value imposes proportionally stronger resistance. Second, we vary rotor armature, which changes motor-side inertia, over $\{0,0.05,0.10,0.15,0.20,0.25,0.30\}$ kg$\cdot$m$^2$ while keeping friction fixed. The armature sweep covers and extends beyond the typical Franka range of 0.05--0.15 kg$\cdot$m$^2$. For each setting, we compare direct replay and APEX using paired rollouts, terminal success, and action-tracking error.

\renewcommand*{\thetable}{S2}
\begin{table}[H]
\centering
\footnotesize
\setlength{\tabcolsep}{5.5pt}
\renewcommand{\arraystretch}{0.95}
\caption{Complete fixed-correction sweep on PegInsertionSide replay. Each entry reports the mean success-rate change over $K_p\in\{500,1000,1500\}$ in percentage points. All fixed configurations disable online adaptation and vary only the fixed values of the position-, velocity-, and acceleration-level correction coefficients.}
\label{tab:fixed_sweep_all_peg}
\vspace{1mm}
\begin{tabular}{ccrrrr}
\toprule
$\hat\alpha$ & $\hat\theta_v$ & $\hat\theta_a=0$ & $\hat\theta_a=0.01$ & $\hat\theta_a=0.02$ & $\hat\theta_a=0.05$ \\
\midrule
0.0 & 0.00 & $0.0$ & $-10.0$ & $-11.1$ & $-22.2$ \\
0.0 & 0.05 & $-13.3$ & $-13.3$ & $-13.3$ & $-11.1$ \\
0.0 & 0.10 & $-37.8$ & $-35.6$ & $-42.2$ & $-52.2$ \\
0.0 & 0.20 & $-58.9$ & $-56.7$ & $-57.8$ & $-60.0$ \\
\midrule
0.2 & 0.00 & $0.0$ & $0.0$ & $0.0$ & $-6.7$ \\
0.2 & 0.05 & $0.0$ & $-2.2$ & $2.2$ & $-3.3$ \\
0.2 & 0.10 & $-32.2$ & $-33.3$ & $-33.3$ & $-33.3$ \\
0.2 & 0.20 & $-62.2$ & $-60.0$ & $-62.2$ & $-60.0$ \\
\midrule
0.4 & 0.00 & $10.0$ & $5.6$ & $5.6$ & $1.1$ \\
0.4 & 0.05 & $5.6$ & $11.1$ & $7.8$ & $8.9$ \\
0.4 & 0.10 & $-33.4$ & $-32.2$ & $-31.1$ & $-31.1$ \\
0.4 & 0.20 & $-57.8$ & $-57.8$ & $-60.0$ & $-56.7$ \\
\midrule
0.6 & 0.00 & $15.6$ & $\mathbf{16.7}$ & $13.4$ & $6.7$ \\
0.6 & 0.05 & $13.3$ & $13.3$ & $12.2$ & $16.7$ \\
0.6 & 0.10 & $-35.6$ & $-32.2$ & $-32.2$ & $-30.0$ \\
0.6 & 0.20 & $-57.8$ & $-60.0$ & $-55.6$ & $-57.8$ \\
\bottomrule
\end{tabular}
\end{table}

\renewcommand*{\thefigure}{S2}
\begin{figure}[H]
    \centering
    \includegraphics[width=0.84\linewidth]{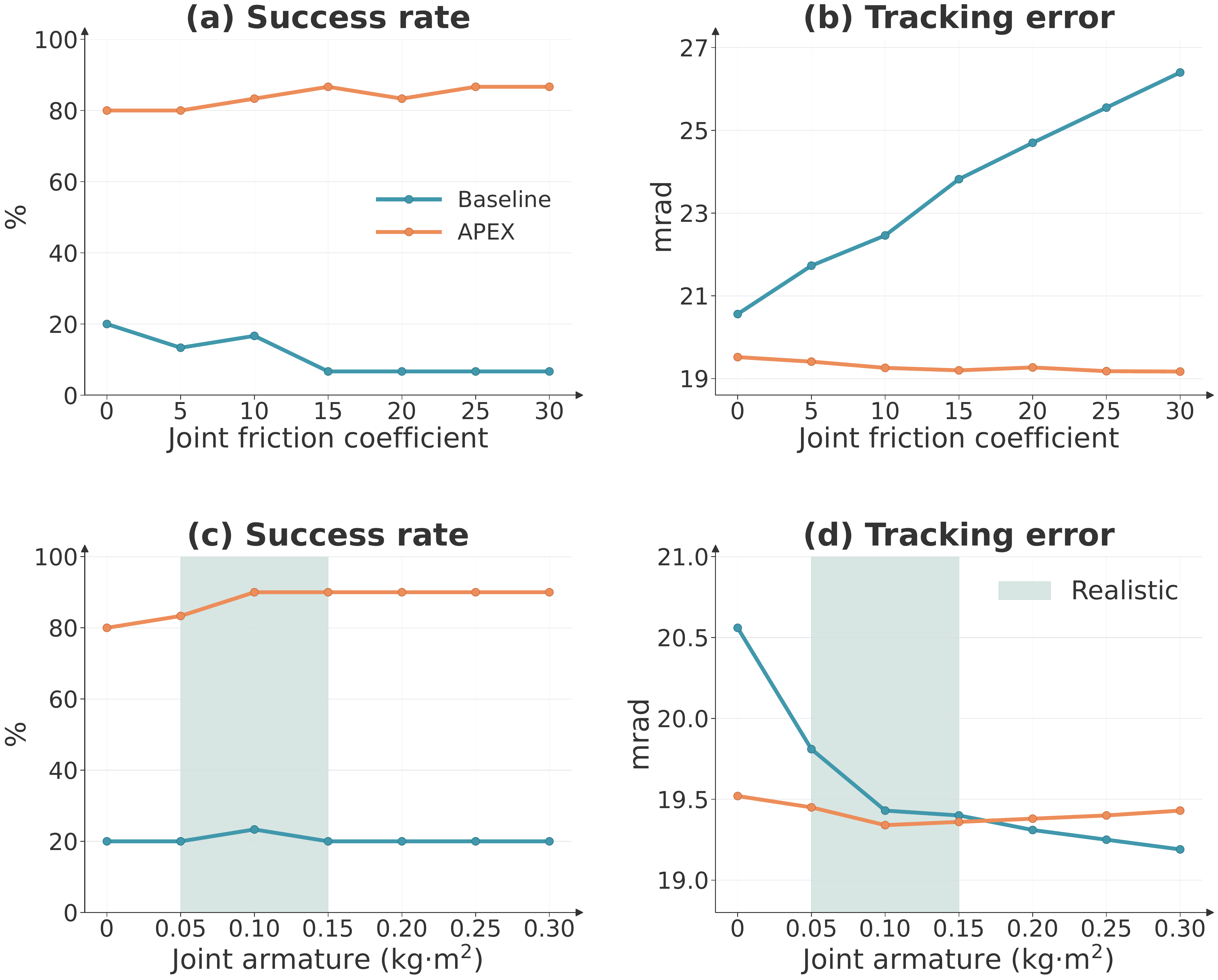}
    \caption{
    Physics perturbation robustness on PegInsertionSide-v1 replay.
    Panels (a,b) vary the joint Coulomb friction coefficient; panels (c,d) vary the rotor armature.
    Baseline is replay without APEX, and the shaded region marks the realistic
    Franka armature range.
    }

    \label{fig:physics-perturbation}
\end{figure}

Under increasing friction, direct replay degrades substantially. As the friction coefficient increases from 0 to 30, its success rate drops from 20\% to 7\%, and its tracking error increases from 20.6 to 26.4 mrad. In contrast, APEX maintains 80--87\% success rate, with tracking error remaining between 19.2 and 19.5 mrad. Consequently, the success gain of APEX increases from +60 to +80 pp as friction becomes more severe.

The armature perturbation exhibits a different failure mode. Mean tracking error changes only mildly for both direct replay and APEX, yet the terminal outcomes remain clearly separated. Direct replay achieves only 20--23\% success across the sweep, whereas APEX maintains 80--90\% success, including all settings within the typical Franka armature range.

These results indicate that APEX is not merely tuned to the default simulator dynamics. When friction increases, APEX prevents the tracking-error degradation observed under direct replay. When motor-side inertia changes, APEX preserves high task success even though the mean tracking error is less sensitive to this parameter. Within the PegInsertion replay setting, APEX therefore remains robust to the tested perturbations in joint friction and rotor armature.

\section{ACT ManiSkill Results}
\label{app:act_full_grid}
\FloatBarrier

\renewcommand*{\thefigure}{S3}
\begin{figure}[H]
  \centering
  \includegraphics[width=0.84\linewidth]{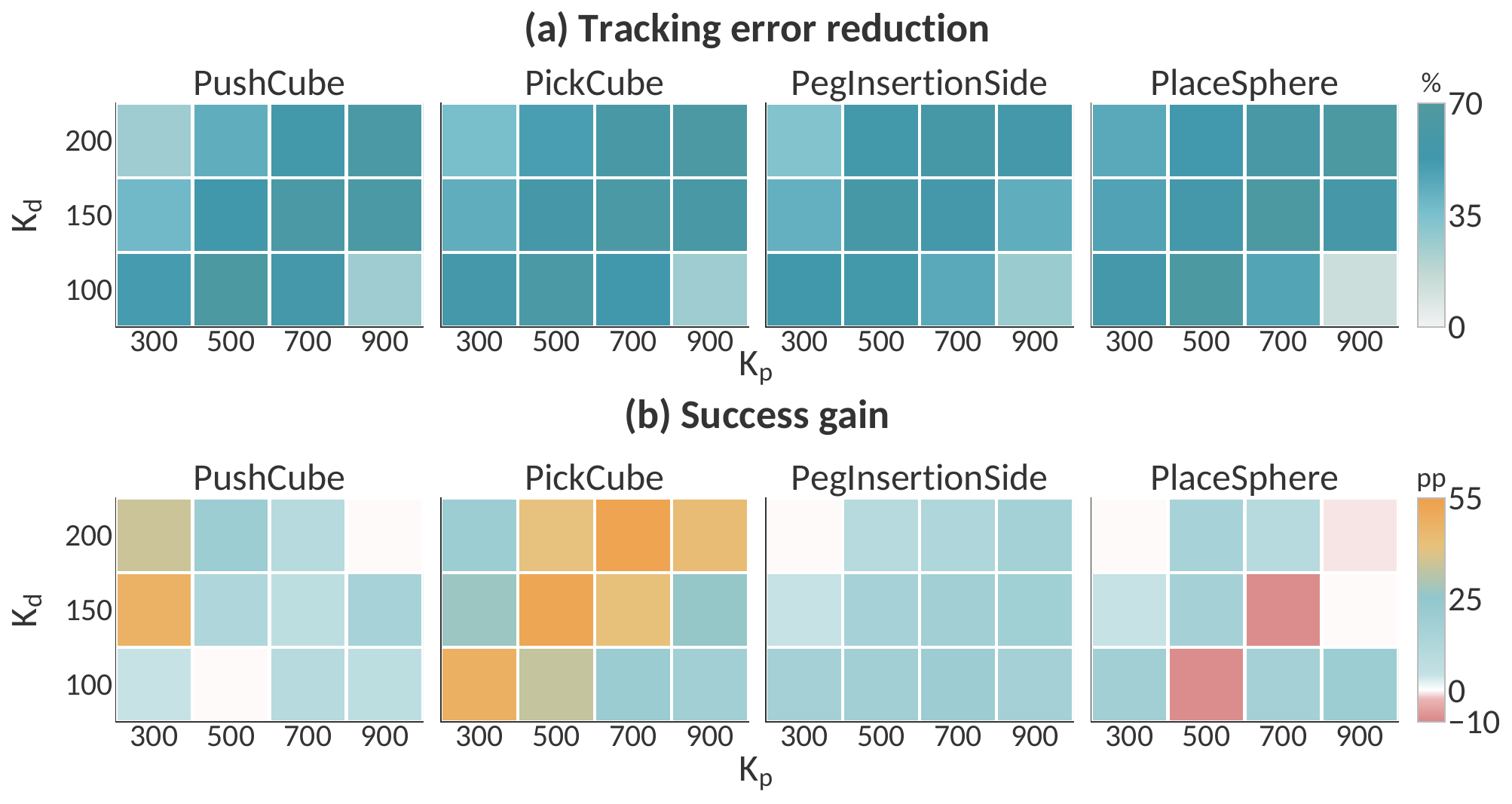}
  \caption{Full ACT performance grid. \textbf{(a)} Tracking-error reduction in percentage.
  \textbf{(b)} Success-rate gain in percentage points. Each heatmap sweeps $K_p$
  and $K_d$ for one task.}
  \label{fig:act_full_grid}
\end{figure}

Fig.~\ref{fig:act_full_grid} provides the complete ACT controller-gain sweep corresponding to Sec.~\ref{sec:act_flagship}. For each task, the policy is fixed and only the low-level PD gains are varied. Each setting is evaluated with 100 paired rollouts using matched initial seeds for direct execution and APEX. The heatmaps sweep $K_p$ along the horizontal axis and $K_d$ along the vertical axis, reporting tracking-error reduction in the top row and success-rate gain in the bottom row. Across the 48 task-controller settings, APEX improves the mean success rate from 27.4\% to 40.6\%. It increases success in 40 settings, leaves 5 unchanged, and decreases success in only 3, showing that the correction is broadly effective across controller-gain mismatch rather than tuned to a single configuration.

\section{\texorpdfstring{$\pi_{0.5}$}{pi0.5} LIBERO Results}
\label{app:pi05_libero_kp}
\FloatBarrier

Fig.~\ref{fig:pi05_libero_kp_grid} provides the complete $\pi_{0.5}$ controller-gain sweep corresponding to Sec.~\ref{sec:cross_policy}. For each LIBERO Spatial task, the policy is fixed and only the proportional-gain scale of the low-level controller is varied. Each setting is evaluated with 20 paired rollouts using matched initial seeds for direct execution and APEX. The curves sweep the $K_p$ scale, reporting task success in the top row and end-effector tracking error in the bottom row. Here, tracking error is measured as the positional deviation between the target and realized end-effector positions during policy rollout.

The evaluated LIBERO Spatial tasks are listed in Tab.~\ref{tab:pi05_libero_tasks}. Across the 12 task-controller settings, APEX improves the mean success rate from 72.1\% to 97.9\%. It increases success in 11 settings and leaves 1 unchanged, indicating that the execution-side correction remains effective for a VLA policy with end-effector delta actions and inverse-kinematics-based execution.



\renewcommand*{\thetable}{S3}
\begin{table}[htbp]
    \centering
    \small
    \caption{$\pi_{0.5}$ LIBERO Spatial tasks used in the controller-gain sweep.}
    \label{tab:pi05_libero_tasks}
    \begin{tabularx}{\linewidth}{c>{\raggedright\arraybackslash}X}
        \toprule
        Task & Language instruction \\
        \midrule
        1 & Pick up the black bowl between the plate and the ramekin and place it on the plate. \\
        2 & Pick up the black bowl next to the ramekin and place it on the plate. \\
        3 & Pick up the black bowl from the table center and place it on the plate. \\
        \bottomrule
    \end{tabularx}
\end{table}

\renewcommand*{\thefigure}{S4}
\begin{figure}[H]
    \centering
    \includegraphics[width=0.84\linewidth]{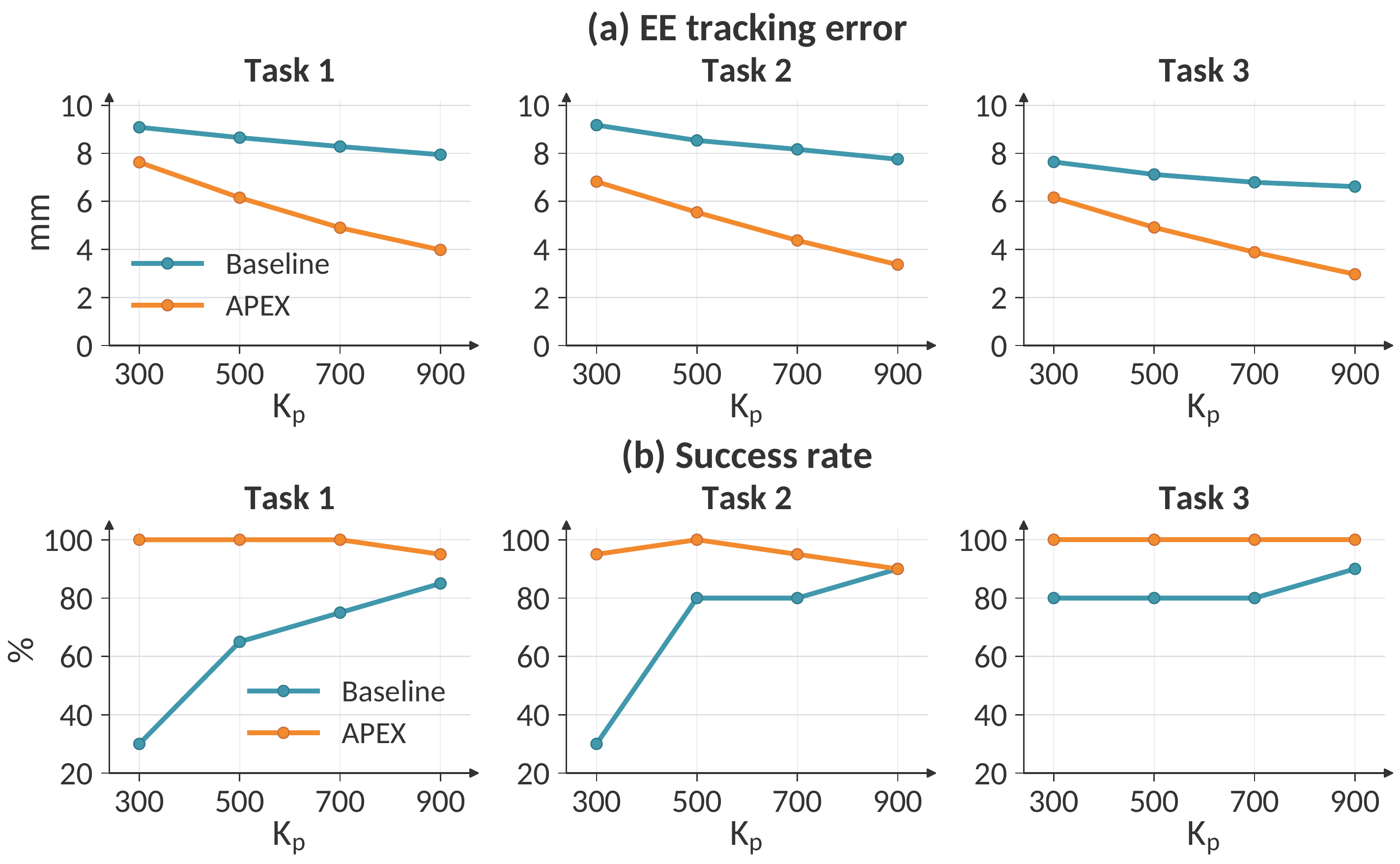}
    \vspace{-10pt}
    \caption{Full $\pi_{0.5}$ performance grid on LIBERO Spatial tasks. \textbf{(a)}
    End-effector tracking error. \textbf{(b)} Task success rate. Each column
    corresponds to one task, and each curve sweeps the $K_p$ scale under the
    same fixed policy. }
    \label{fig:pi05_libero_kp_grid}
    \vspace{-5pt}
\end{figure}

\end{document}